\documentclass{article}

\usepackage{microtype}
\usepackage{graphicx}
\usepackage{subcaption}
\usepackage{booktabs} 

\usepackage{hyperref}

\usepackage{icml2026}

\usepackage{amsmath}
\usepackage{amssymb}
\usepackage{mathtools}
\usepackage{amsthm}
\usepackage{xcolor}
\definecolor{darkgreen}{RGB}{0,100,0}

\usepackage[capitalize,noabbrev]{cleveref}

\theoremstyle{plain}
\newtheorem{theorem}{Theorem}[section]
\newtheorem{proposition}[theorem]{Proposition}
\newtheorem{lemma}[theorem]{Lemma}

\theoremstyle{definition}

\theoremstyle{remark}

\icmltitlerunning{Scaling Tasks, Not Samples: Mastering Humanoid Control through Multi-Task Model-Based Reinforcement Learning}

\begin{document}

\twocolumn[
  \icmltitle{Scaling Tasks, Not Samples: Mastering Humanoid Control through Multi-Task Model-Based Reinforcement Learning}

  \begin{center}
  \begin{tabular}{@{}l@{\hspace{1.2cm}}l@{\hspace{1.2cm}}l@{\hspace{1.2cm}}l@{}}
  \textbf{Shaohuai Liu}\textsuperscript{1*} &
  \textbf{Weirui Ye}\textsuperscript{2*} &
  \textbf{Yilun Du}\textsuperscript{3$\dagger$} &
  \textbf{Le Xie}\textsuperscript{3$\dagger$} \\
  \small Texas A\&M University &
  \small MIT &
  \small Harvard University &
  \small Harvard University \\
  \small \texttt{liushaohuai5@tamu.edu} &
  \small \texttt{ywr@csail.mit.edu} &
  \small \texttt{ydu@seas.harvard.edu} &
  \small \texttt{xie@seas.harvard.edu} \\
  \end{tabular}
  \end{center}

  \begin{center}
  \textsuperscript{*}Equal contribution \quad
  \textsuperscript{$\dagger$}Equal advising
  \end{center}

  \vskip 0.3in
]



\printAffiliationsAndNotice{}  

\begin{abstract}
Developing generalist robots capable of mastering diverse skills remains a central challenge in embodied AI. While recent progress emphasizes scaling model parameters and offline datasets, such approaches are limited in robotics, where learning requires active interaction. We argue that effective online learning should scale the \emph{number of tasks}, rather than the number of samples per task.
This regime reveals a structural advantage of model-based reinforcement learning (MBRL). Because physical dynamics are invariant across tasks, a shared world model can aggregate multi-task experience to learn robust, task-agnostic representations. In contrast, model-free methods suffer from gradient interference when tasks demand conflicting actions in similar states. Task diversity therefore acts as a regularizer for MBRL, improving dynamics learning and sample efficiency.
We instantiate this idea with \textbf{EfficientZero-Multitask (EZ-M)}, a sample-efficient multi-task MBRL algorithm for online learning. Evaluated on \textbf{HumanoidBench}, a challenging whole-body control benchmark, EZ-M achieves state-of-the-art performance with significantly higher sample efficiency than strong baselines, without extreme parameter scaling. These results establish task scaling as a critical axis for scalable robotic learning. The project website is available \href{https://yewr.github.io/ez_m/}{here}.
\end{abstract}

\section{Introduction}

Generalist agents capable of robust generalization and adaptation remain a central objective in embodied AI research. Inspired by the success of Large Language Models (LLMs), recent trends in robotics have largely prioritized the Foundation Model paradigm—scaling model parameters and aggregating massive offline datasets~\citep{reed2022generalist,brohan2022rt,kim2024openvla,black2410pi0,o2024open}. While effective for broad task coverage, this approach faces a fundamental limitation unique to robotics: unlike passive text generation, embodied intelligence requires active interaction with physical environments. Relying solely on offline data limits an agent’s ability to correct errors, adapt to dynamic changes, and refine behaviors through feedback. Consequently, scaling online learning—improving the efficiency and efficacy of active interaction—is as critical as scaling offline data.

\begin{figure}[t]
    \vskip -0.2cm
    \centering
    \includegraphics[width=0.85\linewidth]{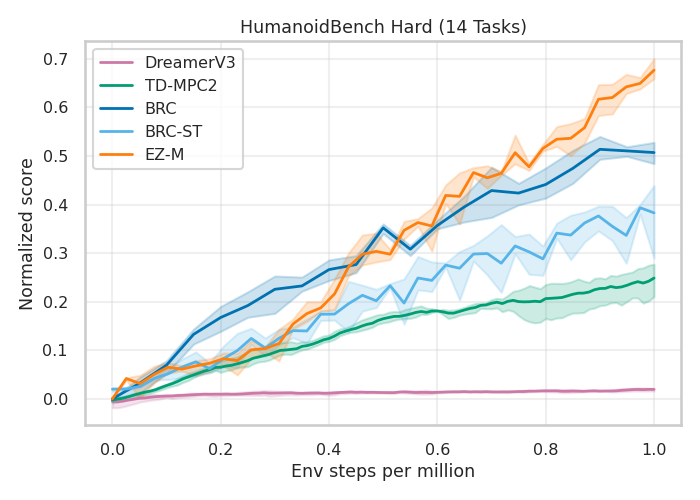}
    \caption{\textbf{Normalized task-average scores on HumanoidBench-Hard}. EZ-M matches and surpasses the strong baselines with environment interactions limited to 1 million. All runs are with 3 random seeds. EZ-M significantly outperforms all baselines.}
    \label{fig:norm_scores}
    \vskip -0.2cm
\end{figure}

We propose that effective online learning requires scaling the breadth of tasks, a paradigm we term online task scaling, rather than merely increasing sample counts per task. This shift exposes a critical structural advantage for Model-Based Reinforcement Learning (MBRL). While Model-Free (MF) policies often suffer from gradient interference when distinct tasks demand conflicting actions in similar states, MBRL focuses on learning the underlying transition dynamics. Since physical laws are task-invariant, a shared world model aggregates diverse multi-task data to refine its understanding of the environment. In this context, task diversity serves as a dynamics regularizer: it prevents the model from overfitting to the narrow state distributions of single tasks, ensuring the learned physics generalize globally. Consequently, MBRL effectively converts multi-task scaling into robust representation learning.

In this work, we introduce EfficientZero-Multitask (EZ-M), a sample-efficient multi-task RL algorithm building upon the EfficientZero architecture~\citep{ye2021mastering}. EZ-M incorporates targeted architectural modifications and consistency regularization to effectively exploit dynamics invariance in online settings. We evaluate our approach on HumanoidBench~\citep{sferrazza2024humanoidbench}, a high-dimensional domain where modeling complex contact-rich dynamics constitutes the primary bottleneck.

Our empirical results demonstrate that EZ-M not only achieves State-of-the-Art (SoTA) performance but also exhibits distinct positive transfer properties: whereas Model-Free (MF) baselines degrade or plateau as task cardinality increases, EZ-M leverages task diversity to improve sample efficiency. This suggests a scalable trajectory for robotic learning: apart from relying exclusively on large-scale offline datasets, we envision a regime of massive parallel online learning, where diverse tasks synergize to construct a unified, generalized world model. And our primary contributions are as follows: 

\begin{itemize} 
    \item We propose EfficientZero-Multitask (EZ-M), extending the sample efficiency of MBRL to the multi-task online setting via novel consistency mechanisms that mitigate negative interference between tasks. 
    \item We provide a theoretical analysis formalizing the structural advantage of MBRL. We show that while MF objectives suffer from gradient conflict in multi-task settings, MBRL benefits from dynamics invariance, yielding asymptotically superior sample efficiency as the number of tasks increases. 
    \item We achieve State-of-the-Art performance on the comprehensive HumanoidBench, demonstrating that online task scaling in MBRL offers a viable pathway for high-dimensional robotic control. \end{itemize}
    
\section{Related Work}

\textbf{Humanoid Control}. Humanoid control remains a rigorous benchmark due to under-actuation and contact complexity. While prior work spans imitation-based tracking~\citep{peng2018deepmimic} and adversarial RL~\citep{peng2021amp}, often leveraging GPU-accelerated simulation~\citep{makoviychuk2021isaac}, comprehensive benchmarks like HumanoidBench~\citep{sferrazza2024humanoidbench} highlight persistent challenges in stability and sample efficiency for online learning.

\textbf{Multi-Task Learning.} To mitigate challenges like gradient interference~\citep{yu2020gradient} and heterogeneous reward scales~\citep{hessel2019multi}, prior work employs techniques ranging from gradient projection~\citep{yu2020gradient,chen2018gradnorm} to distillation~\citep{teh2017distral}. Parallel efforts leverage high-capacity Vision-Language-Action (VLA) models~\citep{kim2024openvla}, applying supervised learning or offline RL to massive teleoperation datasets~\citep{o2024open} to achieve generalization. Diverging from these resource-intensive paradigms, we prioritize \textbf{sample-efficient online learning}. Rather than relying on massive model scaling or pre-collected datasets, we utilize a compact shared dynamics model for MBRL. We demonstrate that structural regularization in dynamics and value learning can effectively substitute for sheer scale, enabling stable, task-scalable online RL within a limited parameter budget.

\textbf{Model-Based RL.} Prior work establishes the efficacy of planning via learned dynamics, ranging from experience augmentation~\citep{sutton1991dyna} and latent rollouts~\citep{hafner2019dream,hafner2023mastering} to search-based trajectory selection~\citep{hansen2022temporal,ye2021mastering} and value re-estimation~\citep{schrittwieser2020mastering,wang2025bootstrapped}. Recent advances further integrate diffusion models~\citep{hafner2025training}. However, these innovations primarily target single-task performance. The limited multi-task MBRL literature restricts itself to MPC approaches~\citep{hansen2023td,hansen2025learning}, leaving the potential of MCTS for online multi-task learning unexplored.
\section{Preliminaries}
\textbf{Multi-Task Reinforcement Learning} (MTRL) aims to train a single agent over a set of tasks $\mathcal{T}$. Each task $\tau\in\mathcal{T}$ introduces an MDP $\mathcal{M}_\tau=(\mathcal{S}_\tau,\mathcal{A}_\tau,P_\tau,r_\tau,\gamma)$ with shared discount factor $\gamma$ but task-dependent state space $\mathcal{S}_\tau$, action space $\mathcal{A}_\tau$, state transition $s_{\tau,t+1}=P_\tau(s_{\tau,t},a_{\tau,t})$, and reward function $r_\tau$. We learn a task-conditioned policy $\pi_\theta(a|s,\tau)$ maximizing the expected return averaged over $\mathcal{T}$, mathematically defined as
$
J(\theta)=\mathbb{E}_{\tau\in\mathcal{T}}\left[\mathbb{E}_{\pi_\theta(\cdot|s,\tau)}\sum_{t\ge0}\gamma^t r_\tau(s_{\tau,t},a_{\tau,t})\right]
$
Compared to single-task RL, MTRL must fit heterogeneous learning signals with a shared representation, which can cause gradient conflicts, negative transfer, and instability, especially for shared critics under different reward/value scales. Practical training typically mixes experience from multiple tasks via task-balanced sampling and task conditioning, while controlling cross-task interference through normalization or other stabilization techniques.

\textbf{Monte Carlo Tree Search and Gumbel Search} are two tree search technologies used in MuZero series and EfficientZero series for producing improved policy and filtered actions. MCTS~\citep{coulom2006efficient,kocsis2006bandit} and grows a search tree by iteratively expanding new nodes on leaves and recursively updating the values of nodes on the dive path. It dives along the path with node descents that have the highest upper confidence bound scores~\citep{auer2002finite}. MCTS produces an improved policy using the visit count distribution of the root descents and a filtered action that is most visited. Therefore, MCTS typically requires considerable node expansions to generate stable, improved policies, and the inability to parallelize also leads to computational efficiency issues~\citep{ye2022spending}. In contrast, Gumbel search~\citep{danihelka2022policy} utilizes Gumbel sample-without-replacement~\citep{kool2019stochastic} and sequential halving~\citep{karnin2013almost} to iteratively dichotomize action spaces. 
It replaces count-based UCB for node selection with Gumbel scores, which are more reliant on value estimations. The benefits of these modifications are not only reduced node expansion, but more importantly, guarantee policy improvements.

\section{Methods}
To train an RL agent across multiple tasks, the algorithm is supposed to satisfy: (1) scale with the number of tasks increases; (2) be adaptive to task-varying observation and action spaces; (3) be robust to task-varying reward and value scales; (4) be computationally efficient in terms of both training and inference. EZ-M is built upon the success of EfficientZero-v2, which proved to be sample and computation-efficient across tasks with visual/proprio inputs and continuous/discrete actions. We extend EfficientZero-v2 to the online, multi-task RL, with improved designs that stabilize multi-task learning as listed below. Our methods could be summarized in Fig.\ref{fig:overview}.

\begin{figure*}[t]
    \centering
    \vskip -.2cm
    \includegraphics[width=0.75\linewidth]{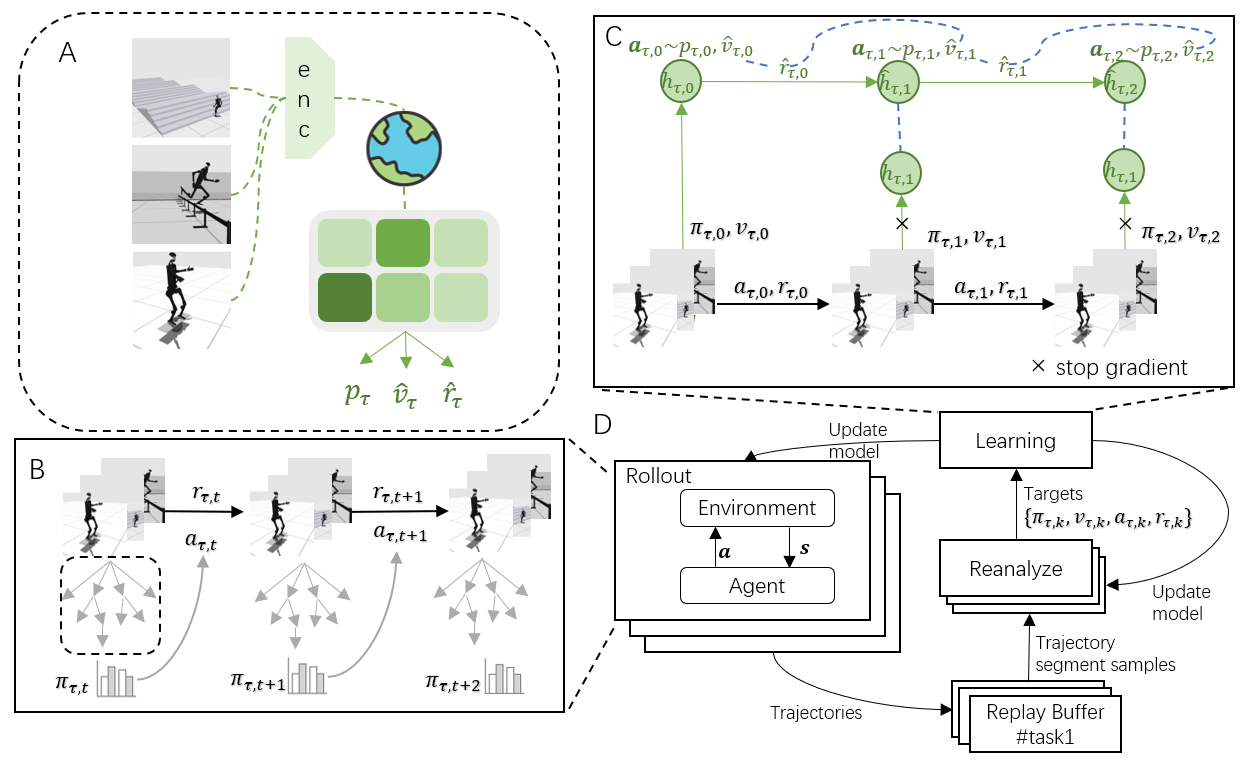}
    \caption{\textbf{Overview of EfficientZero-Multitask (EZ-M)}. \textcolor{darkgreen}{\textbf{Green}} arrows and notations represent model inference and predictions. \textbf{Black} notations and bold arrows indicate target/grounded values and transitions. 
    (A) \textbf{Task-sharing model architecture}. EZ-M uses a shared model to predict policy, value, and reward while training multiple tasks, with each component conditioned by task embeddings.
    (B) \textbf{Balanced data collection}. Rollout workers sample tasks with a balanced schedule and execute the corresponding task policy to interact with the environment. This prevents learning collapse from data imbalance and maintains diverse task coverage over time. 
    (C) \textbf{Multi-task rollout learning}. For each sampled trajectory segment, the model is unrolled in latent space conditioned on the task index to produce \(\hat{r}_{\tau,k}\), \(\hat{v}_{\tau,k}\), and \(\hat{p}_{\tau,k}\) along the imagined rollout. Supervision comes from grounded transitions \((a_{\tau,k}, r_{\tau,k})\) and reanalyzed targets \((\pi_{\tau,k}, v_{\tau,k})\), where cross-entropy loss enables a single network to learn consistent predictions across tasks. \textcolor{blue}{\textbf{Blue}} dashed lines represents temporal consistency and path consistency.
    (D) \textbf{Distributed implementation}. Data collection (Rollout) feeds trajectories into task-independent replay buffers, while Reanalyze periodically recomputes search-based targets with the latest model. The learner consumes reanalyzed batches to update parameters and broadcasts updated weights back to rollout/reanalyze workers for scalable asynchronous training.
    }
    \vskip -.2cm
    \label{fig:overview}
\end{figure*}
 
\subsection{Model, Search and Learning}
\textbf{Model components}. EZ-M learns to plan over action candidates in a hidden state space by jointly conducting temporal consistency modeling, categorical reward/value prediction, and real/imagined TD-learning. In contrast to the methods like Dreamer series~\citep{hafner2019dream,hafner2020mastering,hafner2023mastering} trained to reconstruct the original input, EZ-M didn't employ the reconstruction loss and kept the model learning to be control objective concentrated. Different from EfficientZero-v2, EZ-M uses a separate reward prediction path identical to state transition prediction rather than connecting a reward prediction head after the next state prediction $r_{t}=\mathcal{R}(h_{t+1})$. To avoid the interference of task-varying action dimensions and scales, an action encoder projects actions from different tasks into the same action embedding space to stabilize task-sharing model training. 
Specifically, the EZ-M architecture consists of six learned sub-networks:
\begin{equation}
\label{eq:model_components}
\begin{aligned}
    h_{\tau,t} &= \mathcal{H}(s_{\tau,t}), & u_{\tau,t} &= \mathcal{U}(a_{\tau,t}), \\
    \hat{h}_{\tau,t+1} &= \mathcal{G}(h_{\tau,t}, u_{\tau,t}), & \hat{r}_{\tau,t} &= \mathcal{R}(h_{\tau,t}, u_{\tau,t}), \\
    \hat{v}_{\tau,t} &= \mathcal{V}(h_{\tau,t}), & p_{\tau,t} &= \mathcal{P}(h_{\tau,t}),
\end{aligned}
\end{equation}
where $\tau$ denotes the task identifier and $t$ represents the time step. The variables $h_{\tau,t}$ and $u_{\tau,t}$ correspond to the latent state representation and action embedding derived from the raw state $s_{\tau,t}$ and action $a_{\tau,t}$, respectively. The dynamics network $\mathcal{G}$ autoregressively predicts the subsequent latent state $\hat{h}_{\tau,t+1}$, while $\hat{r}_{\tau,t}$ and $\hat{v}_{\tau,t}$ estimate the immediate reward and value function.

\textbf{Search with model priors}. We use the above model outputs as prior information to guide the tree search, and apply Gumbel  Trick~\citep{danihelka2022policy,wang2024efficientzero}.

At the root node, we calculate the Gumbel score $g(a)$ of each root descent with action $a$ using model outputs, where
\begin{equation}
    \begin{aligned}
        g(a)&=p(a)+\sigma(\hat{q}(a))\\
    \hat{q}(a)&=\mathbb{E}_{\{s,a\}\sim\mathcal{G},\mathcal{P}}\left[\sum_{t\ge0}^H\gamma^t\hat{r}+\gamma^H \hat{v}\right]
    \end{aligned}
\end{equation}
where $\sigma$ is a monotonous function following~\cite{danihelka2022policy}.  $\{s,a\}$ represents all state-action sets of the search tree with the limited node expansion budget, following the predictions of the dynamic network and policy network. $H$ represents the length of each dive trajectory during node expansion. $\hat{q}(a)$ is the visit count-average of bootstrapped value estimation along the dive paths, normalized to [0,1] using min-max. Sequential Halving~\citep{karnin2013almost} is applied to iteratively dichotomize the root action candidate set after equally expanding the remaining descents, defined as
\begin{equation}
    \mathcal{A}_n=\mathop{\text{argtop}}_{|\mathcal{A}_{n-1}|/2}\ g(a)
\end{equation}
Then it will finally produce a filtered action $A$ and an improved policy $\pi$ via
\begin{gather}
    A=\mathop{\arg\max}_{a\in\mathcal{A}_\text{remain}}\ g(a)\\
    \pi:=(\mathcal{A},\text{softmax}(g),A)
\end{gather}

At non-root nodes, the action selection follows the proposed policy $p$ to avoid unexpected exploration noises finally introduced to root value estimations. This result in a principle that visit count matches policy prior.

\textbf{Policy learning}. In contrast to max-Q policy gradient applied in actor-critic model-based RL algorithms~\citep{hansen2022temporal,hansen2023td}, we use supervised learning that let the policy head output $p$ to mimic the improved policy $\pi$ generated from the tree search. A key concern here is that EZ series are state-value based, thus it is not natural to apply max-Q learning. 
A related work demonstrates such distillation of improved policy can lead to improved performance in TD-MPC framework~\cite{wang2025bootstrapped}. 

We uses full reanalyze, applying tree search on each sampled states from the experience replay, to generate the target policy since EZ-M follows the off-policy RL setting. To be specific, the improved policy $\pi$ consists of sampled action candidate set $\mathcal{A}$, their corresponding Gumbel scores $g$, and a filtered action $A$. We found that a simple Maximum Likelihood Estimation (MLE) loss, defined as
\begin{equation}
    \mathcal{L}_p=-\log p(A)
\end{equation}
produces more stable performance in practice, especially in tasks with high dimensional action spaces.
 
\textbf{Reward prediction}. Accurately predicting environmental rewards under the situation that reward scales vary significantly with tasks is a critical challenge in multi-task model-based RL. Similar problems of task-varying Q-value scales induced by heterogeneous reward scales have placed obstacles in multi-task model free RL~\citep{hessel2019multi,nauman2025bigger}. In contrast to applying complex reward normalization methods like those work, we rather solely uses categorical modeling in reward prediction~\citep{schrittwieser2020mastering} without normalization or parameter scaling. ~\citep{bellemare2017distributional} also reports cross entropy loss over categorical modeling produces more balanced gradients and loss scales since the classification loss is not sensitive to scale changes.

\textbf{Value learning}. Learning to predict value precisely is critical to our off-policy, value-based RL algorithm. Therefore, we applied a bunch of improvements on value prediction. We keep to use categorical modeling in value estimation since we didn't use reward normalization, and it's proved to be effective in predicting task-varying values in multi-task actor-critic RL\citep{nauman2025bigger}. We also use value ensemble proposed in \citep{hansen2023td} to stablize value prediction and alleviate overestimation. Multiple value heads also bring a improvement in mixed value targets proposed in \citep{wang2024efficientzero}, which uses model-based value estimation, the root bootstrapped values of reanalyze tree search, on stale transitions sampled from the replay buffer. \citep{wang2024efficientzero} uses a rule-based method on determining when to use model-based targets and what is stale transition. We can derive the value predicting variance with the multiple value estimations, previously used for designing intrinsic reward and increasing exploration~\citep{osband2016deep,pathak2017curiosity}, and now we use the value variance as the weight balance between TD targets and model-based targets, defined as $v_\text{mix}=\alpha v_\text{td}+(1-\alpha)v_\text{model}$, where the weight coefficient $\alpha=(\text{Var}(\hat{v})-\underline{\text{Var}(\hat{v})})/(\overline{\text{Var}(\hat{v})}-\underline{\text{Var}(\hat{v})})$. $\overline{\text{Var}(\hat{v})}$ and $\underline{\text{Var}(\hat{v})}$ are the empirical maximum and minimum value variance. As the value variance goes higher, particularly in those fresh transitions, it is supposed to rely more on the TD-targets. The variance on those stale transitions is comparatively low, so we can use more model-based targets.

\textbf{Temporal Consistency on Latent States}. To stabilize multi-step rollouts and mitigate representation drift during imagination, we impose a temporal consistency regularization on the latent space. Intuitively, this constraint enforces alignment between the predicted latent state (generated by the dynamics) and the encoded latent state (derived directly from the corresponding future observation). By encouraging these two pathways to converge, we ensure that imagined trajectories remain anchored to the representation induced by real data. Concretely, we minimize the projection distance (e.g., negative cosine similarity~\citep{chen2021exploring}) between the predicted transition and the target representation, utilizing a stop-gradient on the target to prevent collapse. This regularization is critical for multi-task learning, as it prevents the dynamics model from diverging under the distribution shifts introduced by diverse task mixing.

\textbf{Base training objective}. In summary, we formulate the overall training objective as the sum of the reward prediction loss, the policy (or search-improved policy) loss, the value prediction loss, and the latent consistency regularization over $K$ unrolled steps:
\begin{equation}
\begin{aligned}
\mathcal{L}^\text{base}=\sum_{k=0}^K \Big( &\mathcal{L}_\text{CE}(r_k,\hat{r}_k)+\mathcal{L}_p(\pi_k,p_k)\\
+&\mathcal{L}_\text{CE}(v_k,\hat{v}_k)+\mathcal{L}_\text{SimSiam}(h_k,\hat{h}_k)\Big)
\end{aligned}
\end{equation}
where $r_k,\pi_k,v_k$ denote the training targets at unroll step $k$ (e.g., bootstrapped $n$-step reward/value targets and the search policy target), and $\hat{r}_k,p_k,\hat{v}_k$ are the corresponding network predictions. The term $\mathcal{L}_\text{SimSiam}(\cdot)$ collects temporal consistency regularization on latent dynamics/representation. This unified objective allows EZ-M to learn a task-conditioned world model and planner-aligned policy/value predictions end-to-end from replayed trajectories, while maintaining stable latent rollouts that are crucial for effective planning.

\subsection{Path Consistency} 
A critical limitation in the loss design of \citet{wang2024efficientzero} is that reward and value predictors are trained via independent supervision, lacking explicit constraints for mutual consistency along imagined rollouts. Consequently, the learned value $\hat{v}(h_k)$ often diverges from its recursive decomposition $\hat{r}(h_k, u_k) + \gamma \hat{v}(h_{k+1})$. This discrepancy accumulates over rollout steps, leading to compounding estimation errors during tree search. In the multi-task setting, this inconsistency is further exacerbated, as the shared model must simultaneously generalize across heterogeneous task dynamics and varying reward scales.

To address this issue, we incorporate a path-consistency regularizer to stabilize multi-task value learning under heterogeneous reward scales, inspired by~\citep{farquhar2021self}. For a task $\tau$, given a $K$-step trajectory segment $(s_k,a_k,r_k,s_{k+1})^{K-1}_{k=0}$ sampled from the corresponding replay buffer, we minimize the following objective
\begin{equation}
    \mathcal{L}^{pc}=\sum_{k=0}^{K-1}\mathcal{L}_\text{CE}\left(\text{sg}\left(\hat{r}(h_k,u_k)+\gamma\hat{v}(h_{k+1})\right), \hat{v}(h_k)\right)
\end{equation}
The stop-gradient operator $\text{sg}(\cdot)$ prevents destabilizing target coupling between reward and value learning, since the path consistency loss will never update the reward prediction network. It is a satisfying property, as the network can concentrate on the challenging reward prediction across tasks. The final training objective could be formulated as a summation over tasks:
\begin{equation}
    \mathcal{L}=\sum_{\tau\in\mathcal{T}}\mathcal{L}^\text{base}_\tau+\mathcal{L}^\text{pc}_\tau
\end{equation}

\subsection{Multi-Task Design Choices}
\textbf{Learnable task embedding}. To accommodate EZ-M in multi-task learning, we introduce a learnable task embedding to explicitly condition all major components of the agent on the task identity~\citep{hansen2023td}. Concretely, each task index is mapped to a trainable vector that is injected into the representation/dynamics/prediction modules via concatenation. This design provides a lightweight mechanism for task-specific specialization while keeping the majority of parameters shared, which is especially important when tasks exhibit distinct transition or similar patterns. 
In practice, the embedding enables the network to form task-aware latent states and value/reward predictions without requiring separate heads per task, and it also offers a clean interface for extending to new tasks by learning new embeddings while reusing the shared backbone.

\textbf{Action masking and observation padding}. Multi-task environments often differ in action space size, and observation dimensionality in proprio control, which can break a single unified network interface. We address this by using action masking and observation padding to standardize inputs/outputs across tasks. For actions, we maintain a global action space and apply a per-task binary mask to invalidate illegal actions during both policy evaluation and planning. For observations, we pad task-specific observations with all-zero tensors to a fixed maximum dimension. This simple unification avoids ad-hoc architecture branching, preserves vectorized training, and ensures that differences across tasks are handled explicitly and safely (invalid actions are never selected, and padded inputs do not leak spurious signals).

\textbf{Independent Experience Replay.} To mitigate task imbalance and minimize negative transfer arising from heterogeneous data distributions, we employ an independent experience replay strategy. In standard monolithic buffers, tasks with higher reward magnitudes or easier dynamics often dominate the prioritized sampling mechanism. In contrast, we maintain a dedicated replay buffer for each task, enabling explicit control over the inter-task sampling ratio. This decoupling ensures that complex or sparse-reward tasks are not marginalized by dominant ones. Furthermore, this design streamlines the architecture: sampling homogeneous batches facilitates consistent task conditioning and simplifies the management of task-specific action masks.

\section{Theoretical Analysis}
\label{sec:theory}

We formalize the structural advantage of Model-Based RL (MBRL) for online task scaling. We show that while Model-Free (MF) baselines suffer from low gradient similarities across relevant tasks in Fig.\ref{fig:grad_sim}, EZ-M leverages task diversity as a dynamics regularizer, leading to superior asymptotic sample efficiency. 

\subsection{Problem Formulation}
Consider a multi-task setting $\mathcal{T} = \{\tau_1, \dots, \tau_K\}$ sharing a physical environment. We define the global state space $\mathcal{S}$ and action space $\mathcal{A}$ as the union of reachable states and actions across all tasks (or a shared latent manifold). While individual tasks may utilize subsets of these spaces, the physical transition dynamics $P(s'|s,a)$ remain stationary. Each task $\tau_k$ is defined by a unique reward function $r_k(s,a)$. In EZ-M, we map these to a shared latent space via $h = \mathcal{H}(s)$ and $u = \mathcal{U}(a)$.

\subsection{Interference vs. Invariance}

\begin{proposition}[\textbf{Gradient Interference in MF} \citep{yu2020gradient}]
\label{prop:interference}
Let $\theta$ be the shared parameters of a model-free policy. If optimal policies for distinct tasks $\tau_i, \tau_j$ require disjoint actions in the same latent state, the expected cosine similarity of their gradients is negative:
\begin{equation}
    \mathbb{E} \left[ \langle \nabla_\theta \mathcal{L}_{\tau_i}, \nabla_\theta \mathcal{L}_{\tau_j} \rangle \right] < 0.
\end{equation}
\end{proposition}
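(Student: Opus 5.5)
The plan is to make the statement precise in a stylized model and then compute the inner product directly. The shared model-free policy is $\pi_\theta(a\mid h,\tau)$ with a shared feature map; for a fixed latent state $h$ the task-conditioned output is approximately task-agnostic, $\pi_\theta(\cdot\mid h)$, when the task embedding carries little signal. That is exactly the regime where interference arises. I will use a behavior-cloning / policy-improvement surrogate for each task, $\mathcal{L}_{\tau}(\theta)=\mathbb{E}_{h\sim d}\big[-\log \pi_\theta(a^*_\tau(h)\mid h)\big]$. The hypothesis "disjoint actions in the same latent state" is read as $a^*_{\tau_i}(h)\neq a^*_{\tau_j}(h)$ on a set of positive $d$-measure, with both tasks visiting these states. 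A max-Q policy gradient can be reduced to the same form, since its per-state direction is also the score direction toward the task's greedy action.

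\textbf{Step 1 (per-state gradient).} Take a softmax head with logits $z_\theta(h)$. The chain rule gives $\nabla_\theta \mathcal{L}_\tau = \mathbb{E}_h\big[J_\theta(h)^\top(\pi_\theta(\cdot\mid h)-e_{a^*_\tau(h)})\big]$, where $J_\theta=\partial z/\partial\theta$.

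\textbf{Step 2 (the state-wise inner product).} In the logit space the key quantity is $\langle \pi-e_{a_i},\,\pi-e_{a_j}\rangle=\|\pi\|^2-\pi(a_i)-\pi(a_j)$ for $a_i\neq a_j$. Near a compromise solution that splits mass between the two optimal actions, e.g.\ $\pi(a_i)=\pi(a_j)=\tfrac12$, this equals $\tfrac12-1=-\tfrac12<0$. More generally it is negative whenever $\pi(a_i)+\pi(a_j)>\|\pi\|^2$. This always holds unless $\pi$ is concentrated on a third action, because $\|\pi\|^2\le\max_a\pi(a)$.

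\textbf{Step 3 (lift to parameter space).} Under a neural-tangent / locally linear assumption where $J_\theta(h)^\top J_\theta(h')$ is (approximately) diagonal across states and proportional to the identity in logit space, the parameter inner product is the $d$-weighted average of the state-wise inner products. Non-conflicting states contribute nonpositive or small terms. Averaging over the randomness in sampled states and tasks then yields $\mathbb{E}\langle\nabla_\theta\mathcal{L}_{\tau_i},\nabla_\theta\mathcal{L}_{\tau_j}\rangle<0$. Cosine similarity inherits the sign.

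\textbf{Main obstacle.} The hard part is Step 3 together with the sign control on non-conflicting states. Without an NTK-type isotropy assumption, cross-state kernel terms $J(h)^\top J(h')$ can have either sign, and on states where the tasks agree the inner product is positive, $\|\pi-e_a\|^2\ge0$. I would therefore state the proposition under two explicit assumptions:
\begin{itemize}
\item approximately isotropic, state-local kernels;
\item conflicting states carry enough of the visitation mass to dominate the agreeing ones.
\end{itemize}
I would then cite \citet{yu2020gradient} for the empirical counterpart, rather than claim the result unconditionally.
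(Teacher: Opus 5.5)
The paper gives no derivation of Proposition~\ref{prop:interference}. It imports the claim by citation from \citet{yu2020gradient}, who \emph{define} conflicting gradients by negative cosine similarity and observe them empirically, but never derive that sign from ``disjoint optimal actions.'' The paper then uses the proposition only qualitatively in the proof of Theorem~\ref{thm:efficiency}. Your stylized computation is therefore a genuinely different route, and the only actual argument available.

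Your central diagnosis is right: the implication is false without extra hypotheses. Three cases break it:
(a) with an informative task embedding or separate heads, the cross-task tangent kernel can vanish or take either sign;
(b) with a shared $\pi(\cdot\mid h)$ concentrated on a third action, e.g.\ $\pi(a_k)=0.9$, $\pi(a_i)=\pi(a_j)=0.05$, your Step~2 quantity is $0.815-0.1>0$;
(c) agreeing states contribute positively.
What you gain is an explicit mechanism; what remains is to make the hypothesis list complete.

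\textbf{Points to tighten.}

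(i) Your two bulleted assumptions do not suffice. Two more must be listed explicitly: weak task conditioning (which you invoke only in the setup), so both tasks see the same $\pi(\cdot\mid h)$ and Jacobian; and the compromise regime on conflicting states, $\pi(a_i)+\pi(a_j)>\|\pi\|^2$ (e.g.\ mode at $a_i$ or $a_j$). The third-action example defeats the conclusion even with isotropic state-local kernels and dominant conflict mass.

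(ii) ``Nonpositive'' in Step~3 is a sign slip, as your own obstacle paragraph shows. Since $\nabla_\theta\mathcal{L}=J^\top(\pi-e)$, the kernel is $J(h)J(h')^\top$, not $J(h)^\top J(h')$. A state-diagonal kernel then weights states by $d_{\tau_i}(h)\,d_{\tau_j}(h)$ rather than $d(h)$; this is where ``both tasks visit these states'' enters.

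(iii) ``Cosine inherits the sign'' holds for full-batch gradients, not for $\mathbb{E}[\cos]$ of minibatch gradients.

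(iv) Reducing SAC/max-Q actor gradients to your score form is an extra assumption. It holds, e.g., for a locally quadratic isotropic $Q_\tau$ around $a^*_\tau$.

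\textbf{Sidestepping the NTK obstacle.} Use the identity $2\langle g_i,g_j\rangle=\|g_i+g_j\|^2-\|g_i\|^2-\|g_j\|^2$. The inner product is negative, for any architecture, whenever the joint gradient is shorter than $\max(\|g_i\|,\|g_j\|)$. This happens, e.g., near an approximate stationary point of $\mathcal{L}_{\tau_i}+\mathcal{L}_{\tau_j}$ at which an individual gradient stays bounded away from zero. Conflict is what keeps the individual gradients alive: a shared $\pi(\cdot\mid h)$ cannot put mass near $1$ on both $a_i$ and $a_j$.

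In the tabular task-agnostic case this also removes your mass-dominance assumption asymptotically. Agreeing-state terms decay as $\pi\to e_a$, while each conflicting state tends to $-2d_{\tau_i}d_{\tau_j}p(1-p)$ with $p=d_{\tau_i}/(d_{\tau_i}+d_{\tau_j})$. Your $-\tfrac12$ is the special case $g_i=-g_j$.

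For Gaussian heads, which the continuous-control baselines use, the analogue of your per-state term is $\big(\|\mu-m\|^2-\tfrac14\|a_i-a_j\|^2\big)/\sigma^4$, with $m$ the midpoint of $a_i$ and $a_j$. This is negative exactly when the mean lies inside the ball with diameter $[a_i,a_j]$.
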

\textit{Remark:} In standard MF-MTL, this necessitates gradient projection (e.g., PCGrad) or massive parameter scaling to disentangle conflicting task subspaces.

\begin{lemma}[\textbf{Dynamics Invariance}]
\label{lemma:invariance}
The physical transition dynamics $P$ are invariant across $\mathcal{T}$. Therefore, there exists a unique optimal parameter set $\phi^*$ for the dynamics model $\mathcal{G}_\phi$ that minimizes the prediction error for all tasks simultaneously:
\begin{equation}
    \phi^* \in \bigcap_{k=1}^K \arg\min_\phi \mathcal{L}_{\text{dyn}}^{(k)}.
\end{equation}
\end{lemma}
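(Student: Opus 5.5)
The plan is to make the per-task dynamics losses precise, show they share a common minimizer, and then handle the word ``unique'' separately. Take
\[
\mathcal{L}_{\text{dyn}}^{(k)}(\phi)=\mathbb{E}_{(s,a)\sim d_k}\,\mathbb{E}_{s'\sim P(\cdot|s,a)}\big[\ell(\mathcal{G}_\phi(\mathcal{H}(s),\mathcal{U}(a)),\mathcal{H}(s'))\big],
\]
where $d_k$ is the state-action occupancy of task $\tau_k$ on the shared space $\mathcal{S}\times\mathcal{A}$, and $\ell$ is a proper loss such as squared error, or the negative cosine similarity used in $\mathcal{L}_\text{SimSiam}$ with stop-gradient targets. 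The key observation is that task identity enters only through the sampling measure $d_k$. The conditional target $P(\cdot|s,a)$ is the same for every $k$, by the stationarity assumption of the problem formulation. So each $\mathcal{L}_{\text{dyn}}^{(k)}$ is a $d_k$-weighted average of one pointwise risk
\[
\rho_\phi(s,a)=\mathbb{E}_{s'\sim P}\,\ell(\cdot,\cdot),
\]
and the reward functions $r_k$ never appear.

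\textbf{Step 1 (pointwise optimum).} For a proper loss, $\rho_\phi(s,a)$ is minimized at each $(s,a)$ by the Bayes predictor $g^*(s,a)$: the conditional mean of $\mathcal{H}(s')$ for squared loss, or the corresponding direction for the cosine loss. This predictor depends only on $P$ and $\mathcal{H}$.

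\textbf{Step 2 (realizability and intersection).} I will assume the model class is realizable: some $\phi^*$ has $\mathcal{G}_{\phi^*}(\mathcal{H}(s),\mathcal{U}(a))=g^*(s,a)$ on $\bigcup_k \operatorname{supp} d_k$. Then $\rho_{\phi^*}\le\rho_\phi$ pointwise for every $\phi$. Integrating against each $d_k$ gives $\phi^*\in\arg\min_\phi \mathcal{L}_{\text{dyn}}^{(k)}$ for all $k$, hence $\phi^*$ lies in the intersection. As a corollary, $\phi^*$ also minimizes $\sum_k\mathcal{L}_{\text{dyn}}^{(k)}$ and any positively weighted mixture. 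This is the sense in which dynamics gradients do not conflict, in contrast to \cref{prop:interference}: at $\phi^*$ all task gradients vanish together.

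\textbf{Step 3 (the word ``unique'').} This is the main obstacle, because as literally stated it is false. Reparametrization symmetries (for example neuron permutations) give many $\phi$ realizing the same function. Each $\arg\min$ also only pins $\mathcal{G}_\phi$ down on $\operatorname{supp} d_k$. I will therefore state uniqueness at the level of functions, modulo parametrization: the intersection is nonempty, and every element of it induces the same predictor $g^*$ on $\bigcup_k\operatorname{supp} d_k$. The proof runs as follows. Any $\phi$ in the intersection attains the minimal risk under each $d_k$. Since $\rho_\phi\ge\rho_{\phi^*}$ pointwise, equality of the integrals forces $\rho_\phi=\rho_{\phi^*}$ $d_k$-a.e. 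Strict propriety of $\ell$ then gives $\mathcal{G}_\phi=g^*$ $d_k$-a.e. for each $k$. I will flag in a remark the two assumptions this needs: realizability, and a fixed (or jointly optimal) encoder $\mathcal{H}$, since the latent target moves with $\mathcal{H}$ and collapse is excluded only through the stop-gradient design. The same remark should note that coverage of the identified region grows with $K$, which is what the later sample-efficiency claims will use.
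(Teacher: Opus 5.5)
Your proposal is correct and is essentially the paper's own reasoning made rigorous: the paper gives no proof beyond the remark that the physical consequence of an action is independent of task intent, so tasks differ only through the sampling measure $d_k$ and not through the target. Your added realizability hypothesis and function-level uniqueness on $\bigcup_k \operatorname{supp} d_k$ modulo reparametrization are exactly what the literal statement needs, since for a misspecified class the best fits under different $d_k$ generally differ, making the intersection empty, and parameter-level uniqueness fails anyway.

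One small correction: the negative cosine loss is invariant to positive rescaling of the prediction, so it is not strictly proper. For that choice of $\ell$, Step~3 therefore pins down $\mathcal{G}_\phi$ only up to a positive pointwise multiple of $g^*$ (its direction), while the squared-error case goes through as written.
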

\textit{Remark:} Unlike policy learning, dynamics learning presents no conflicting objectives. While distinct tasks may require contradictory actions in the same state (e.g., Task A requires ``turning left'' vs. Task B requires ``turning right''), the physical consequence of any action remains objective and independent of task intent. Thus, task diversity effectively acts as a regularizer, forcing the representation to capture the causal structure of the environment rather than task-specific correlations.

\subsection{Sample Complexity Analysis}

Let $d_{\text{dyn}}$ and $d_{\text{rew}}$ denote the sample complexity to learn the transition dynamics and a single reward function.

\textbf{Assumption 1 (Dynamics Dominance).} In high-dimensional robotic control, learning the physical dynamics is significantly more sample-intensive than learning a task-specific reward: $d_{\text{dyn}} \gg d_{\text{rew}}$.

\textit{Remark:} This is consistent with theoretical findings in reward-free exploration~\citep{jin2020reward, agarwal2020flambe}, which establish that learning the transition operator governs the minimax sample complexity (scaling with $O(S^2 A)$), whereas reward estimation is comparatively cheap (scaling with $O(SA)$).

\begin{theorem}[\textbf{Asymptotic Task Scaling Efficiency}]
\label{thm:efficiency}
Let $N_{\text{MF}}$ and $N_{\text{MB}}$ denote the total sample complexity to master $K$ tasks. Under worst-case interference for MF and shared dynamics for MB, the per-task sample complexity satisfies:
\begin{equation}
    \lim_{K \to \infty} \frac{N_{\text{MB}}(K)}{K} = d_{\text{rew}} \ll \lim_{K \to \infty} \frac{N_{\text{MF}}(K)}{K} = d_{\text{dyn}} + d_{\text{rew}}.
\end{equation}
\end{theorem}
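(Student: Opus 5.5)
The plan is to make the two sample-complexity models explicit as additive cost accountings and then take the limit. First I will fix a stylized cost model consistent with Assumption 1. A learner has \emph{mastered} task $\tau_k$ once it has (i) an accurate model of whatever part of the environment its objective depends on and (ii) an accurate estimate of $r_k$. Learning the dynamics from scratch costs $d_{\text{dyn}}$ samples, and learning one reward costs $d_{\text{rew}}$ samples.

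For the model-based side I will invoke Lemma~\ref{lemma:invariance}. Since a single $\phi^*$ minimizes $\mathcal{L}_{\text{dyn}}^{(k)}$ for every $k$, transitions collected under any task are valid supervision for the shared $\mathcal{G}_\phi$. The dynamics cost is therefore paid once in total, not once per task. Each reward head (or task-embedding-conditioned $\mathcal{R}$) still needs its own $d_{\text{rew}}$ samples, which gives the upper bound
\begin{equation*}
N_{\text{MB}}(K) \le d_{\text{dyn}} + K d_{\text{rew}} .
\end{equation*}
A matching lower bound $N_{\text{MB}}(K)\ge K d_{\text{rew}}$ holds because the reward functions are distinct and unobserved without task-specific samples. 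Dividing by $K$ and letting $K\to\infty$, the amortized term $d_{\text{dyn}}/K$ vanishes, so the limit is $d_{\text{rew}}$.

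For the model-free side I will formalize ``worst-case interference'' through Proposition~\ref{prop:interference}. When optimal actions are disjoint in shared latent states, the expected gradient inner product is negative. In the worst case the updates from other tasks contribute no net progress toward task $k$, or even undo it, so no samples are effectively shared. A model-free learner has no separate dynamics object. It must implicitly absorb the dynamics into each task's value or policy, so each task costs $d_{\text{dyn}}+d_{\text{rew}}$. This gives
\begin{equation*}
N_{\text{MF}}(K) = K\,(d_{\text{dyn}} + d_{\text{rew}}),
\end{equation*}
whose per-task limit is $d_{\text{dyn}}+d_{\text{rew}}$. The relation $\ll$ then follows directly from Assumption 1.

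The main obstacle is justifying the model-free lower bound rigorously. A negative expected gradient inner product does not by itself imply zero transfer of dynamics knowledge, and in principle a shared representation could still amortize part of $d_{\text{dyn}}$. My first attempt would make the worst case an explicit modeling assumption. Under it, the multi-task model-free objective decomposes into $K$ orthogonal or adversarial subproblems, for example by constructing a family of tasks whose optimal Q-functions share no exploitable structure beyond the dynamics. Each task then individually requires $d_{\text{dyn}}$ samples to resolve its Q-function. I would state this reduction as the precise meaning of ``worst-case interference'' in the theorem rather than derive it from Proposition~\ref{prop:interference} alone.
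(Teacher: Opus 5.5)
Your proposal follows the paper's proof essentially exactly: the paper also charges the model-free learner $K(d_{\text{dyn}}+d_{\text{rew}})$ by appeal to gradient interference, charges the model-based learner $d_{\text{dyn}}+K d_{\text{rew}}$ by appeal to Lemma~\ref{lemma:invariance}, and then divides by $K$. Your sandwich $K d_{\text{rew}} \le N_{\text{MB}}(K) \le d_{\text{dyn}}+K d_{\text{rew}}$, and your choice to state worst-case interference as an explicit modeling assumption rather than derive it from Proposition~\ref{prop:interference}, just make precise what the paper writes loosely as $\approx$ and $\propto$.
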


\begin{proof}[Proof]
Model-free agents implicitly entangle dynamics and reward learning. Due to gradient interference (Proposition \ref{prop:interference}), they fail to share structural knowledge effectively, solving each control problem as an independent instance: $N_{\text{MF}} \propto K(d_{\text{dyn}} + d_{\text{rew}})$.
In contrast, EZ-M decouples these processes. The invariant dynamics $\mathcal{G}$ are learned once (Lemma \ref{lemma:invariance}). For any additional task $k > 1$, the agent leverages the mature world model and only incurs the cost of learning the reward head. Thus, $N_{\text{MB}} \approx 1 \cdot d_{\text{dyn}} + K \cdot d_{\text{rew}}$. Dividing by $K$ and taking the limit yields the result.
\end{proof}
\section{Experiments}
In this section, we outline the experiments that evaluate our proposed EZ-M approach. The experiments are designed to answer the following questions: (1) How does the performance of EZ-M compare to mainstream baselines? (2) How does model performance change with the number of tasks trained simultaneously? (3) How do world model components share knowledge between tasks? (4) What is the necessity of proposed methods, such as path consistency, task embedding, and independent experience replay? The answers to these questions are listed in the following subsections, and more details and curves of our experiment can be found in Appendix.\ref{app:env-perf-curves}.

\subsection{Experimental settings}
\textbf{Benchmarks}. We evaluate EZ-M on HumanoidBench~\citep{sferrazza2024humanoidbench}, a challenging suite for whole-body humanoid control that requires coordinating dozens of degrees of freedom under contact-rich dynamics. Compared to standard locomotion benchmarks like DMControl~\citep{tassa2018deepmind} saturated with baselines, HumanoidBench involves difficult tasks like stair, hurdle, and maze that still remain a considerable gap to be solved. Following ~\citep{nauman2025bigger}, we consider two difficulty tiers, \textbf{Medium} and \textbf{Hard}. Medium uses the H1-* setting (no hand control) with 9 locomotion tasks: stand, walk, run, stair, crawl, slide, pole, hurdle, and maze. Hard uses H1hand-* (with hand control) with 14 tasks: the same 9 locomotion tasks plus sit-simple, sit-hard, balance-simple, balance-hard, and reach. Environment steps of both settings are limited to 1 million. The action-repeat is not applied.

\textbf{Baselines.} We compare EZ-M against established baselines spanning single-task (ST) and multi-task (MT) regimes, as well as model-free (MF) and model-based (MB) approaches. Key baselines include \textbf{TD-MPC2} (ST, MB)~\citep{hansen2023td}, \textbf{DreamerV3} (ST, MB)~\citep{hafner2023mastering}, \textbf{MH-SAC} (MT, MF)~\citep{yang2020multi}, and \textbf{BRC} (MT, MF)~\citep{nauman2025bigger}. Baseline scores are sourced directly from published results where available; otherwise, we report results obtained from re-evaluating official code. Notably, BRC relies on a massive 1B-parameter model to achieve its performance. In contrast, EZ-M utilizes a compact architecture with only 16M parameters, demonstrating that structured world models can achieve superior sample efficiency without brute-force parameter scaling.

\textbf{Evaluation Metrics.} To assess aggregate performance, we report normalized scores calculated as detailed in Appendix~\ref{app:score-norm}. Additionally, we provide full training curves plotting original returns against environment steps in Figure~\ref{fig:medium-unnorm-curves} and Figure~\ref{fig:hard-unnorm-curves} (Appendix~\ref{app:env-perf-curves}).

\subsection{Performance Analysis \& Knowledge Sharing}
EZ-M achieves state-of-the-art (SoTA) performance on the challenging HumanoidBench suite. As illustrated in Figure~\ref{fig:medium-unnorm-curves}, EZ-M demonstrates superior sample efficiency across the 9 medium-difficulty tasks compared to baselines. Furthermore, it maintains this dominance in the higher-dimensional regime, outperforming competitors on the 14 contact-rich H1Hand tasks (Figure~\ref{fig:hard-unnorm-curves}). Quantitatively, EZ-M effectively solves or achieves SoTA results in 7 out of 9 tasks in the Medium setting (Table~\ref{tab:humanoidbench_medium}) and 10 out of 14 tasks in the Hard setting (Figure~\ref{fig:hard-unnorm-curves}), highlighting its robustness in complex control domains.

\textbf{Gradient Similarity Analysis.} To investigate the underlying mechanisms, we analyze inter-task gradient similarity, comparing EZ-M's World Model components (Dynamics, Reward, Policy/Value) against BRC's Actor/Critic. We select (\textit{h1-walk}, \textit{h1-run}) as a \textit{related} pair and (\textit{h1-walk}, \textit{h1-crawl}) as a \textit{distantly related} one. As shown in Figure~\ref{fig:grad_sim}, EZ-M consistently maintains high similarity for the related pair (darker orange lines) while keeping it low for the distant one (lighter orange lines). It indicates that EZ-M effectively exploits positive transfer while mitigating interference. In contrast, BRC exhibits consistently lower similarity even for related tasks, suggesting that the model-based architecture facilitates superior knowledge transfer via shared dynamics.

\textbf{Failure Case Analysis.} While EZ-M dominates in locomotion and dynamic manipulation tasks, we observe a performance gap in stationary tasks like \textit{sit-hard} compared to BRC (Table \ref{tab:humanoidbench_hard}). We hypothesize that the intricate self-contact dynamics in sitting poses are harder for the world model to predict accurately, leaving room for future improvements in contact modeling.

\textbf{Computational Efficiency.} Despite the inherent overhead of MCTS planning, EZ-M maintains high efficiency through an asynchronous distributed architecture~\citep{ye2021mastering} that decouples rollout generation from model optimization. For the Medium task suite, EZ-M requires approximately \textbf{10 hours} on 2 NVIDIA A40 GPUs. In contrast, the strongest baseline, BRC, requires more than \textbf{40 hours} on a single A40 GPU to reach the same step count. These results demonstrate that EZ-M's parallelized design effectively offsets its planning costs, yielding a significantly faster research turnaround than sequential model-free baselines.

\begin{figure*}[h]
    \centering
    \vskip -.2cm
    \includegraphics[width=0.9\linewidth]{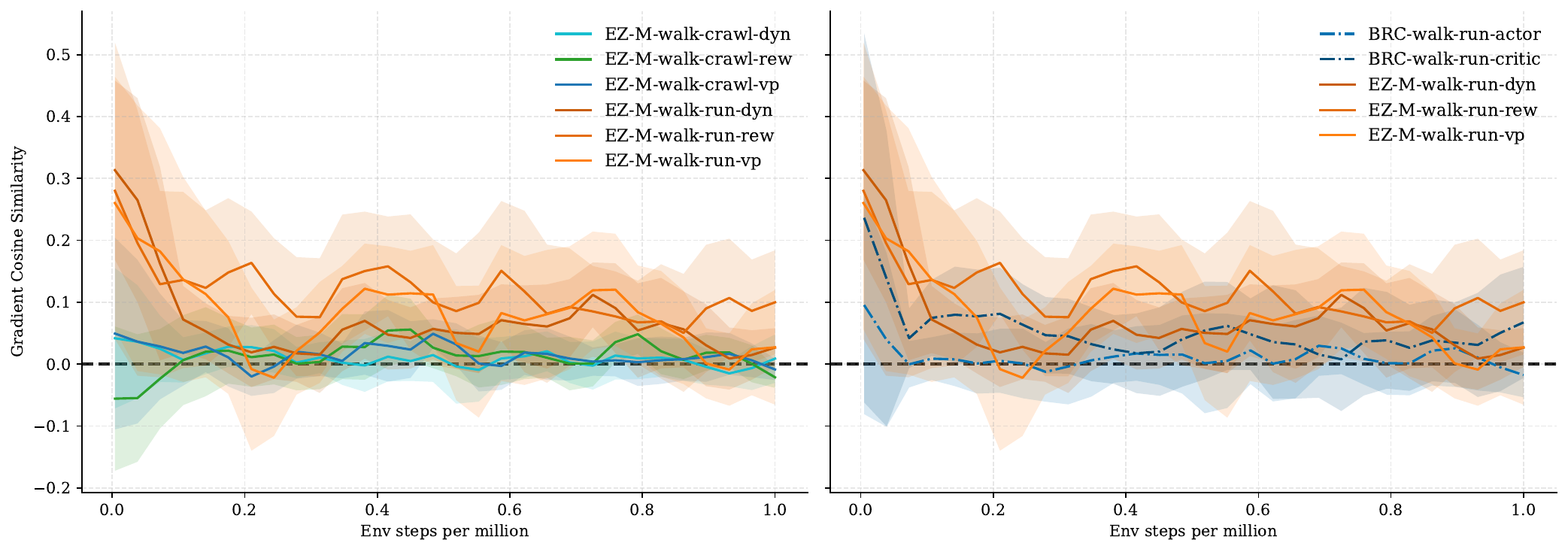}
    \caption{\textbf{Gradient similarities over training process}. We compare the gradient similarities on different model modules between BRC and EZ-M. We choose two task pairs, (\textit{h1-walk}, \textit{h1-run}) and (\textit{h1-walk}, \textit{h1-crawl}), representing \textit{relevant} and \textit{irrelevant} tasks, respectively. \textit{dyn}, \textit{rew}, and \textit{vp} represent the dynamics, reward, and value-policy model in EZ-M. \textbf{(Left)} We validate that the gradient similarities between the relevant task pair are higher than the irrelevant across the training process, indicating positive knowledge transfer. \textbf{(Right)} We showcase that EZ-M modules have higher gradient similarities than BRC modules in the relevant task pair. Curves are slightly smoothed for better visualization.}
    \label{fig:grad_sim}
    \vskip -.2cm
\end{figure*}

\subsection{Performance scaling with the number of tasks}
\begin{figure}[h]
    \vskip -.2cm
    \centering
    \includegraphics[width=0.8\linewidth]{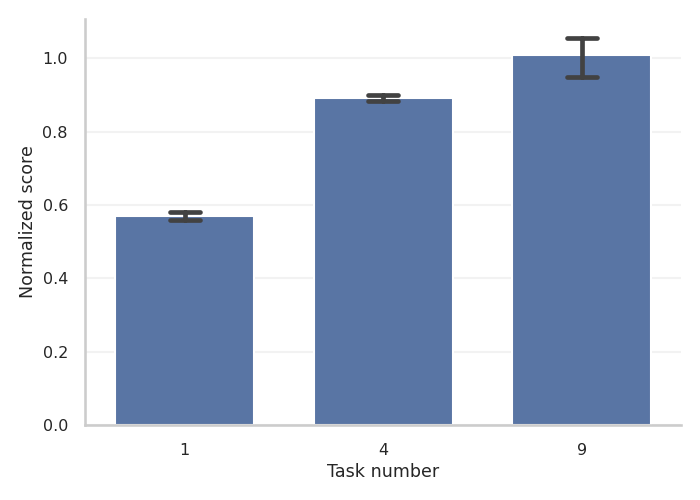}
    \caption{\textbf{Performance scale with the number of tasks.} \textit{y}-axis denotes the task-average, normalized episodic return and \textit{x}-axis represents number of tasks in training simultaneously. We use 3 different random seeds and report 95\% confidence intervals.}
    \label{fig:task_scaling}
    \vskip -.2cm
\end{figure}

We further investigate the scaling properties of EZ-M with respect to the number of concurrent tasks. We evaluate training configurations with task cardinalities of $N \in \{1, 4, 9\}$, where $N=1$ serves as the Single-Task baseline. For the $N=4$ setting, we partition the tasks into two disjoint subsets: \{\textit{h1-stand}, \textit{h1-walk}, \textit{h1-run}, \textit{h1-stair}\} and \{\textit{h1-crawl}, \textit{h1-pole}, \textit{h1-slide}, \textit{h1-hurdle}\}. To strictly isolate the benefit of multi-task scaling, we report the average performance on these 8 tasks across all settings. As illustrated in Figure~\ref{fig:task_scaling}, EZ-M exhibits a positive scaling law: performance consistently improves with the number of tasks given the same total environment step budget. This trend empirically validates the efficacy of multi-task online RL as a mechanism for amplifying sample efficiency.

\subsection{Ablation study}
\begin{figure}[h]
    
    \centering
    \includegraphics[width=1.0\linewidth]{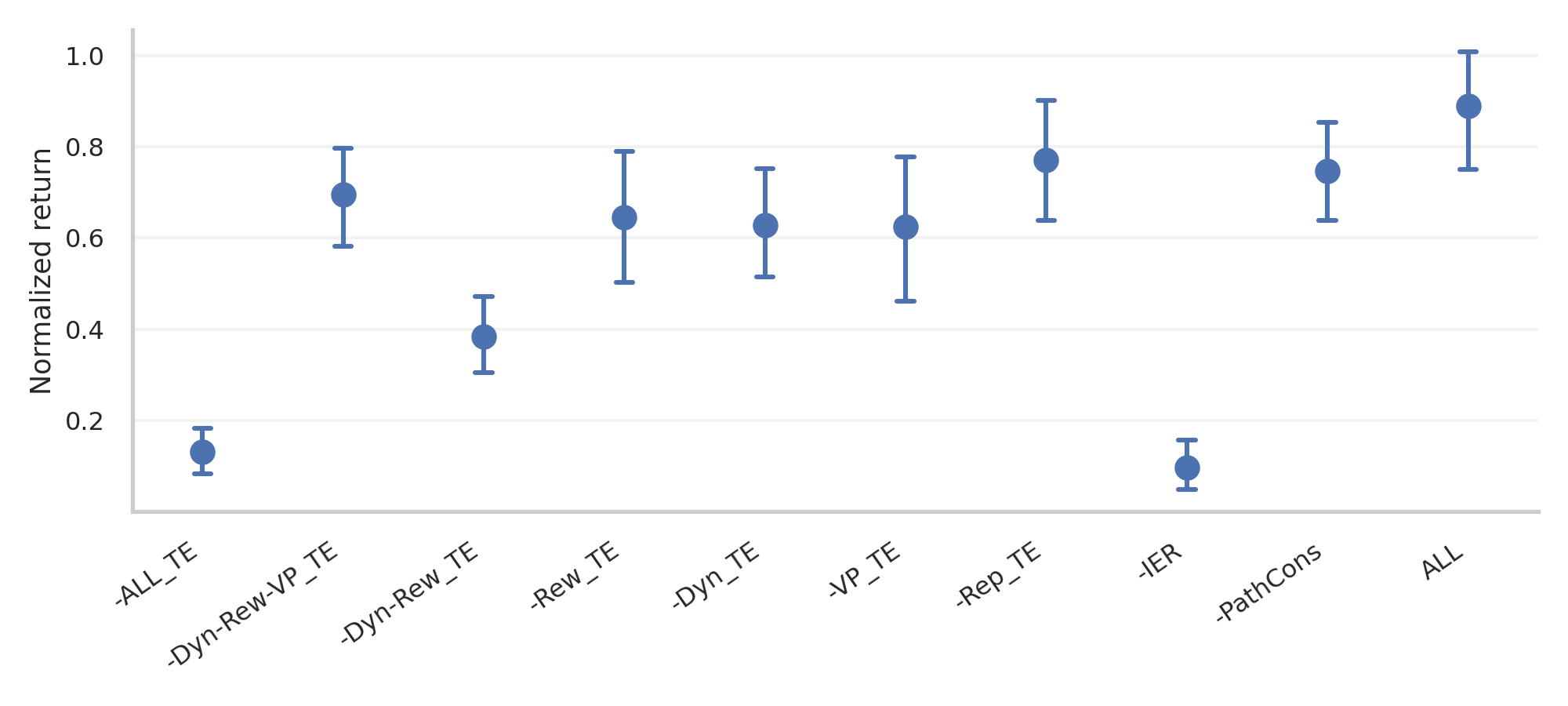}
    \caption{\textbf{Ablations on model components.} \textit{y}-axis denotes the task-average, normalized episodic return and \textit{x}-axis represents different settings. $-$ represents removing a component. \textit{TE} represents task embedding. \textit{Dyn}, \textit{Rew}, \textit{VP}, \textit{Rep} represent the dynamics, reward, value-policy, representation model, respectively. -\textit{Dyn}-\textit{Rew}-\textit{VP}\_\textit{TE} indicates removing the task embedding on dynamics, reward, and value-policy models. \textit{IER} represents independent experience replay. \textit{PathCons} is the path consistency loss. We use 3 different random seeds and report 95\% confidence intervals.}
    \label{fig:ablation}
    \vskip -.2cm
\end{figure}

Finally, we perform ablation studies to isolate the individual contributions of the core mechanisms introduced in EZ-M: Path Consistency (PC) loss, Independent Experience Replay (IER), and Task Embeddings. Given the architectural depth of world models, we additionally investigate the optimal injection points for task conditioning. All experiments are conducted on the HumanoidBench-Medium setting with a 1M step budget. As illustrated in Figure~\ref{fig:ablation}, \textbf{Independent Experience Replay} emerges as the most critical component; its removal leads to the most severe performance degradation, underscoring its role in mitigating data imbalance. The \textbf{Path Consistency loss} also provides substantial gains by enforcing alignment between reward and value predictions. Furthermore, our structural analysis confirms that task embeddings are indispensable, particularly when conditioning the Dynamics and Reward models to capture task-specific physics and objectives.

\section{Discussion}
In this work, we demonstrated the efficacy of Model-Based Reinforcement Learning for scaling online multi-task robotic control. Our results on HumanoidBench validate that task scaling serves as a powerful regularizer: by leveraging invariant physical dynamics, EZ-M converts diverse multi-task data into robust representations, achieving sample efficiency superior to model-free baselines. This confirms that shared world models effectively bridge distinct behaviors while mitigating the gradient interference that plagues policy-centric approaches. Limitations remain regarding the computational overhead of tree search, which may pose latency challenges for high-frequency real-world deployment. Additionally, our framework assumes a shared physical embodiment; transfer efficacy may diminish across tasks with disjoint dynamics or heterogeneous morphologies. Future work will focus on (1) distilling the planner into a lightweight policy for real-time inference, and (2) integrating language-conditioned goals to extend EZ-M toward open-ended instruction following.

\bibliography{example_paper}
\bibliographystyle{icml2026}

\newpage
\appendix
\onecolumn
\section{Training Curves}
\label{app:env-perf-curves}

Fig.~\ref{fig:medium-unnorm-curves} and Fig.~\ref{fig:hard-unnorm-curves} report the training dynamics on the Medium and Hard task suites, respectively.
We plot the evaluation return against environment steps, where each curve is aggregated over multiple random seeds (with shaded regions indicating the variability across runs).
These plots complement the main paper summary by revealing both learning speed and stability throughout training.

\begin{figure}[h]
    \centering
    \includegraphics[width=1.0\linewidth]{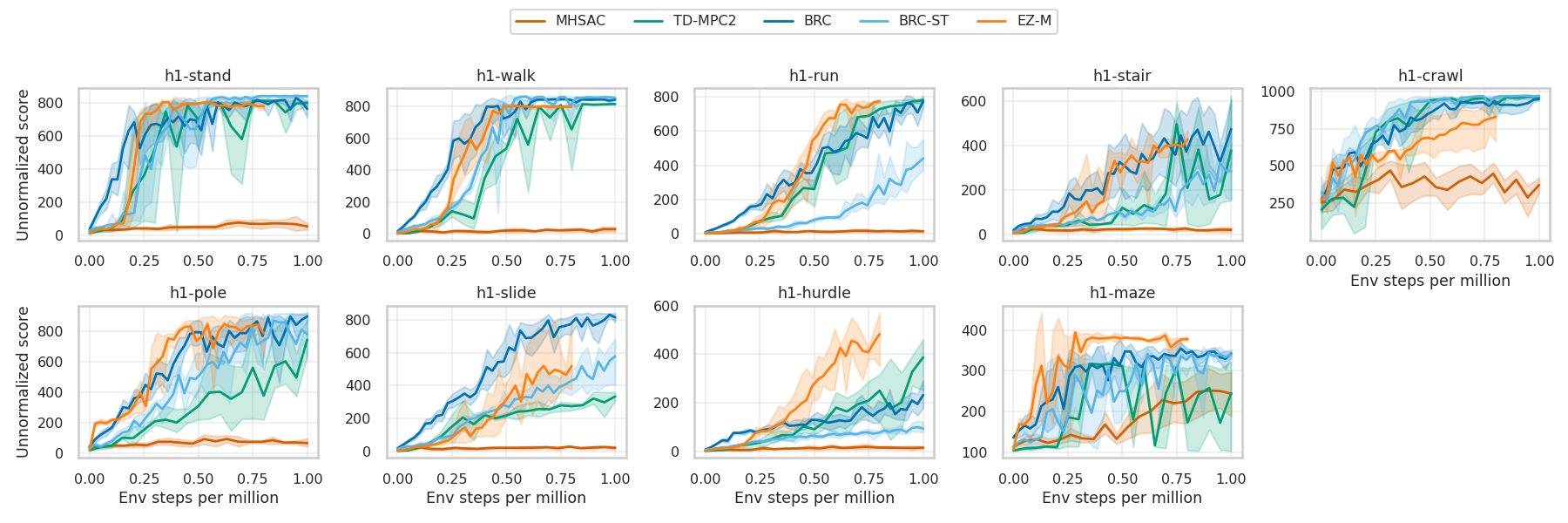}
    \caption{\textbf{Unnormalized curves on HumanoidBench-Medium benchmark.} \textit{y}-axis denotes the episodic return and \textit{x}-axis represents environment interactions per million. All experiments use 3 different random seeds. We report 95\% confidence intervals.} 
    \label{fig:medium-unnorm-curves}
\end{figure}

\begin{figure}[h]
    \centering
    \includegraphics[width=1.0\linewidth]{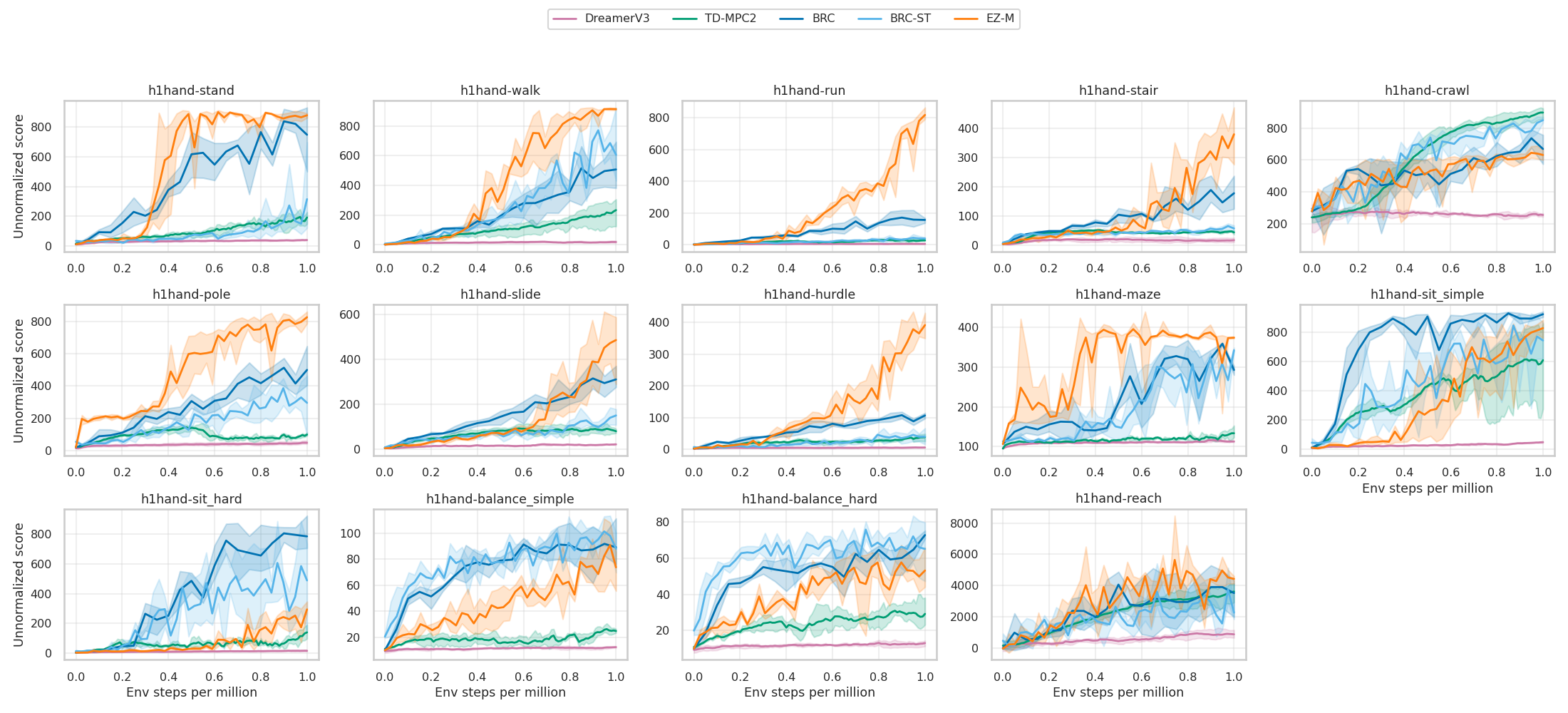}
    \caption{\textbf{Unnormalized curves on HumanoidBench-Hard benchmark.} \textit{y}-axis denotes the episodic return and \textit{x}-axis represents environment interactions per million. All experiments use 3 different random seeds. We report 95\% confidence intervals.}
    \label{fig:hard-unnorm-curves}
\end{figure}

\newpage
\section{Score Normalization \& Table Performance}
\label{app:score-norm}
We normalize episode returns using the following equation:
\begin{equation}
    \text{Return}_\text{norm}=\frac{\text{Return}-\text{Random Score}}{\text{Success Score}-\text{Random Score}}
\end{equation}
The random score and success score of each task could be found in Table.\ref{tab:humanoidbench_medium} and \ref{tab:humanoidbench_hard}.
\begin{table}[h]
\centering
\caption{\textbf{Scores for HumanoidBench-Medium (1M) tasks.} We \textbf{bold} the results surpass the success score or achieve the SoTA.}
\label{tab:humanoidbench_medium}
\begin{tabular}{lrrrrrrr}
\toprule
Task & Random & Success & MHSAC & TD-MPC2 & BRC & BRC-ST & EZ-M(ours) \\
\midrule
\texttt{h1-stand}  
& 10.545 & 800.0 
& 74.904  
& \textbf{810.688} 
& \textbf{829.358} 
& \textbf{841.246} 
& \textbf{806.034} \\

\texttt{h1-walk}   
& 2.377 & 700.0 
& 29.352  
& \textbf{814.249} 
& \textbf{844.784} 
& \textbf{858.926} 
& \textbf{802.969} \\

\texttt{h1-run}    
& 2.020 & 700.0 
& 17.434  
& \textbf{778.845} 
& \textbf{772.299} 
& 439.246 
& \textbf{775.867} \\

\texttt{h1-stair}  
& 3.112 & 700.0 
& 25.421  
& \textbf{497.040} 
& 474.852 
& 322.088 
& 459.234 \\

\texttt{h1-crawl}  
& 272.658 & 700.0 
& 449.088  
& \textbf{963.360} 
& \textbf{949.326} 
& \textbf{971.295} 
& \textbf{830.971} \\

\texttt{h1-slide}  
& 3.191 & 700.0 
& 29.004  
& 334.359 
& \textbf{831.837} 
& 578.478 
& 577.740 \\

\texttt{h1-pole}   
& 20.090 & 700.0 
& 93.175  
& \textbf{744.285} 
& \textbf{897.267} 
& \textbf{869.307} 
& \textbf{878.993} \\

\texttt{h1-hurdle} 
& 2.214 & 700.0 
& 19.038  
& 387.131 
& 232.031 
& 99.793 
& \textbf{542.286} \\

\texttt{h1-maze}   
& 106.441 & 1200.0 
& 251.297  
& 311.648 
& 354.939 
& 344.027 
& \textbf{380.795} \\
\bottomrule
\end{tabular}
\end{table}

\begin{table}[h]
\centering
\caption{Scores for HumanoidBench-Hard (1M) tasks. We \textbf{bold} the results surpass the success score or achieve the SoTA.}
\label{tab:humanoidbench_hard}
\begin{tabular}{lrrrrrrr}
\toprule
Task & Random & Success & DreamerV3 & TD-MPC2 & BRC & BRC-ST & EZ-M(ours) \\
\midrule
\texttt{h1hand-stand}  
& 11.973 & 800.0 
& 40.938  
& 196.511 
& \textbf{840.061} 
& 315.990 
& \textbf{891.780} \\

\texttt{h1hand-walk}   
& 2.505 & 700.0 
& 19.819  
& 234.061 
& 515.383 
& \textbf{771.471} 
& \textbf{919.960} \\

\texttt{h1hand-run}    
& 1.927 & 700.0 
& 6.580  
& 32.551  
& 171.790 
& 44.512  
& \textbf{818.397} \\

\texttt{h1hand-stair}  
& 3.161 & 700.0 
& 16.193  
& 46.537  
& 188.810 
& 64.864  
& \textbf{380.349} \\

\texttt{h1hand-crawl}  
& 278.868 & 800.0 
& 261.801 
& \textbf{897.086} 
& 735.992 
& \textbf{849.207} 
& 644.386 \\

\texttt{h1hand-slide}  
& 3.142 & 700.0 
& 20.625  
& 89.454  
& 313.723 
& 148.877 
& \textbf{535.404} \\

\texttt{h1hand-pole}   
& 19.721 & 700.0 
& 48.306  
& 98.703  
& 512.936 
& 386.245 
& \textbf{841.353} \\

\texttt{h1hand-hurdle} 
& 2.371 & 700.0 
& 6.381  
& 38.890  
& 108.067 
& 46.112  
& \textbf{405.686} \\

\texttt{h1hand-maze}   
& 106.233 & 1200.0 
& 116.332 
& 133.406 
& 358.803 
& 342.268 
& \textbf{380.642} \\

\texttt{h1hand-sit\_simple} 
& 10.768 & 750.0 
& 47.866  
& 612.656 
& \textbf{926.804} 
& \textbf{843.850} 
& \textbf{814.762} \\

\texttt{h1hand-sit\_hard}   
& 2.477 & 750.0 
& 15.589  
& 139.181 
& \textbf{805.502} 
& 605.966 
& 304.836 \\

\texttt{h1hand-balance\_simple} 
& 10.170 & 800.0 
& 12.615  
& 26.401  
& 91.749  
& \textbf{101.260} 
& 98.004 \\

\texttt{h1hand-balance\_hard}   
& 10.032 & 800.0 
& 13.110  
& 29.886  
& \textbf{72.918}  
& 72.199  
& 57.565 \\

\texttt{h1hand-reach}  
& -50.024 & 12000.0 
& 894.156 
& 3610.454 
& 3895.343 
& 3729.537 
& \textbf{5353.093} \\
\bottomrule
\end{tabular}
\end{table}

\newpage
\section{Hyperparameters \& Model architecture}\
Table~\ref{tab:key_hyperparams} summarizes the key hyperparameters used in our experiments.
Unless otherwise specified, we keep the same hyperparameter configuration across all tasks to ensure a fair comparison and to reduce per-environment tuning.

Table~\ref{tab:model_architecture} details the network architecture used by EZ-M on HumanoidBench-Hard.
The model follows a modular design with a shared representation backbone, a task-embedding conditioning, and separate dynamics/reward/value/policy prediction components.
We adopt lightweight MLP backbones with LayerNorm and residual blocks for stable optimization, while using discrete-support heads (51 bins) for reward and value prediction to improve learning signal quality.
Overall, the full network contains only \textbf{16M} parameters, which is orders of magnitude smaller than the \textbf{BRC} baseline with a \textbf{1B}-scale model, highlighting the parameter-efficiency of our approach.

\begin{table}[h]
\centering
\caption{Key hyperparameters used in HumanoidBench experiments}
\label{tab:key_hyperparams}
\begin{tabular}{ll}
\toprule
\textbf{Hyperparameter} & \textbf{Value} \\
\midrule
Environment & HumanoidBench \\
Discount factor $\gamma$ & 0.99 \\
Unroll steps & 5 \\
TD steps & 5 \\
Batch size & 512 \\
Replay buffer size & $1\times 10^{6}$ (Medium) / $2\times 10^{6}$ (Hard) \\
Training steps & $2\times 10^{5}$ (Medium) / $6\times 10^{5}$ (Hard) \\
\midrule
Network hidden dimension & 512 \\
Task embedding dimension & 128 \\
Task embedding usage & Rep / Dyn / Rew / Policy-Value \\
Number of res blocks & 3 \\
\midrule
MCTS simulations & 32 \\
Top actions & 16 \\
Sampled actions & 16 \\
\midrule
Optimizer & Adam \\
Learning rate & $1\times 10^{-4}$ \\
Weight decay & $2\times 10^{-5}$ \\
\bottomrule
\end{tabular}
\end{table}

\begin{table}[h]
\centering
\caption{EfficientZero network architecture (for HumanoidBench-Hard).}
\label{tab:model_architecture}
\begin{tabular}{@{}ll@{}}
\toprule
Module & Architecture \\ \midrule
Representation $f_\theta$ 
& MLP(305$\to$512) + LN + Tanh; 3$\times$ ResBlock(512) \\
Task embedding 
& Embedding(14, 128), max\_norm=1 \\
Dynamics $g_\theta$
& LN(701); MLP(701$\to$512$\to$512); 3$\times$ ResBlock(512) \\
Reward head $r_\theta$
& act-MLP(189$\to$64)+LN+Tanh; \\
& dyn trunk: LN(704)+MLP(704$\to$512$\to$512)+3$\times$ ResBlock(512); \\
& rew: [Linear(512$\to$512)+BN+ReLU]$\times$2; Linear(512$\to$51) \\
Value heads $v_\theta$
& trunk dim=640; 3$\times$ ResBlock(640); \\
& 5 heads: [Linear(640$\to$512)+Dropout+BN+ReLU]$\times$2; Linear(512$\to$51) \\
Policy head $\pi_\theta$
& [Linear(640$\to$512)+BN+ReLU]$\times$2; Linear(512$\to$122) \\
Projection
& Linear(512$\to$1024)+LN+ReLU; Linear(1024$\to$1024)+LN+ReLU; \\
& Linear(1024$\to$512)+LN \\
Proj. head
& Linear(512$\to$1024)+LN+ReLU; Linear(1024$\to$512) \\
\textbf{Total params} & \textbf{16M}\\
\bottomrule
\end{tabular}
\end{table}

\end{document}